\newcommand{\true}{{\bf true}}
\newcommand{\false}{{\bf false}}
\newtheorem{lemma}{Lemma}
\newtheorem{theorem}{Theorem}
\newtheorem{bv12irrelevant}{BV12-Irrelevant}
\newtheorem{bv12intrinsic}{BV12-Intrinsic}
\newtheorem{hall07irrelevant}{Hall07-Irrelevant}
\newtheorem{hall07intrinsic}{Hall07-Intrinsic}
\newtheorem{dependenceirrelevant}{Dependence-Irr}
\newtheorem{dependenceintrinsic}{Dependence-Intr}
\newtheorem{productionirrelevant}{Production-Irr}
\newtheorem{productionintrinsic}{Production-Intr}
\newtheorem{definition}{Definition} 
\begin{document}
\bibliographystyle{acmtrans}

\long\def\comment#1{}

\title{Towards a General Framework for Actual Causation Using CP-logic}

\author[S. Beckers and J. Vennekens]
{SANDER BECKERS and JOOST VENNEKENS \\
KULeuven - University of Leuven\\
\{Sander.Beckers,Joost.Vennekens\}@cs.kuleuven.be
}

\pagerange{\pageref{firstpage}--\pageref{lastpage}}
\volume{\textbf{10} (3):}
\jdate{March 2002}
\setcounter{page}{1}
\pubyear{2002}

\maketitle

\label{firstpage}

\begin{abstract}
Since Pearl's seminal work on providing a formal language for causality, the subject has garnered a lot of interest among philosophers and researchers in artificial intelligence alike. One of the most debated topics in this context is the notion of actual causation, which concerns itself with specific -- as opposed to general -- causal claims. The search for a proper formal definition of actual causation has evolved into a controversial debate, that is pervaded with ambiguities and confusion. The goal of our research is twofold. First, we wish to provide a clear way to compare competing definitions. Second, we want to improve upon these definitions so they can be applied to a more diverse range of instances, including non-deterministic ones. To achieve these goals we provide a general, abstract definition of actual causation, formulated in the context of the expressive language of CP-logic (Causal Probabilistic logic). We will then show that three recent definitions by Ned Hall (originally formulated for structural models) and a definition of our own (formulated for CP-logic directly) can be viewed and directly compared as instantiations of this abstract definition, which also allows them to deal with a broader range of examples.

\end{abstract}
\begin{keywords}
actual causation, CP-logic, counterfactual dependence
\end{keywords}

\section{Introduction}

Suppose we know the causal laws that govern some domain, and that we then observe a story that takes place in this domain; when should we now say that, in this particular story, one event caused another? Ever since Lewis \shortcite{lewis73} first analyzed this problem of actual causation (a.k.a. token causation) in terms of counterfactual dependence, philosophers and researchers from the AI community alike have been trying to improve on his attempt. Following \cite{pearl:book}, structural equations have become a popular formal framework for this \cite{hitchcock07,hitchcock09,halpernpearl05a,hh14}. A notable exception is the work of Ned Hall, who has extensively critiziced the privileged role of structural equations for causal modelling, as well as the definitions that have been expressed with it. He has proposed several definitions himself \shortcite{hall03,hall04,hall07}, the latest of which is a sophisticated attempt to overcome the flaws he observes in those that rely too heavily on structural equations. We have developed a definition of our own in \cite{beckers,vennekens11}, within the framework of CP-logic (Causal Probabilistic logic). 

The relation between these different approaches is currently not well understood. Indeed, they are all expressed using different formalisms (e.g., neuron diagrams, structural equations, CP-logic, or just natural language). Therefore, comparisons between them are limited to verifying on which examples they (dis)agree. In this paper, we work towards a remedy for this situation. We will present a general, parametrized definition of actual causation in the context of the expressive language of CP-logic. Exploiting the fact that neuron diagrams and structural equations can both be reduced to CP-logic, we will then show that our definition and three definitions by Ned Hall can be seen as particular instantiations of this parametrized definition. This immediately provides a clear, conceptual picture of the similarities and differences between these approaches. Our analysis thus allows for a formal and fundamental comparison between them.

This general framework for comparing different approaches to actual causation is the main contribution of this paper. In addition, placing existing approaches in this framework may make it easier to improve/extend them. Our versions of Hall's definitions illustrate this, as their scope is expanded to also include non-deterministic examples, and cases of causation by omission. Further, our formulations prove to be simpler than the original ones and their application becomes more straightforward. While our ambition is to work towards a framework that encompasses a large variety of approaches to actual causation, this goal is obviously infeasible within the scope of a single paper. We have therefore chosen to focus most of our attention on Hall, because his work is both among the most refined and most influential in this field; in addition, it is also representative for a larger body of work in the counterfactual tradition.

We first introduce the CP-logic language in Section \ref{sec:cp}. In Section \ref{sec:def}, a general definition of actual causation is first presented, and then instantiated into four concrete definitions. Section \ref{sec:summary} offers a succinct representation of all these definitions, and an illustration of how they compare to each other.

\section{CP-logic}\label{sec:cp}

We give a short, informal introduction to CP-logic. A detailed description can be found in \cite{vennekens09}. The basic syntactical unit of CP-logic is a CP-law, which takes the general form $Head \leftarrow Body$. The body can in general consist of any first-order logic formula. However, in this paper, we restrict our attention to conjunctions of ground literals. The head contains a disjunction of atoms annotated with probabilities, representing the possible effects of this law. When the probabilities in a head do not add up to one, we implicitly assume an {\em empty} disjunct, annotated with the remaining probability. 

Each CP-law models a specific {\em causal mechanism}. Informally, if the $Body$ of the law is satisfied, then at some point it will be applied, meaning one of the disjuncts in the $Head$ is chosen, each with their respective probabilities. If a disjunct is chosen containing an atom that is not yet $\true$, then this law causes it to become $\true$; otherwise, the law has no effect. A finite set of such CP-laws forms a CP-theory, and represents the causal structure of the domain at hand. The domain unfolds by laws being applied one after another, where multiple orders are often possible, and each law is applied at most once. We illustrate with an example from \cite{hall04}:
\begin{quote}
Suzy and Billy each decide to throw a rock at a bottle. When Suzy does so, her rock shatters the bottle with probability $0.9$. Billy's aim is slightly worse and he only hits with probability $0.8$.
\end{quote}
This small causal domain can be expressed by the following CP-theory $T$:

\begin{minipage}{0.4\textwidth}
\begin{align} 
Throws(Suzy) &\leftarrow.\label{suzy1}\\
Throws(Billy) &\leftarrow.
\end{align}
\end{minipage}
\begin{minipage}{0.5\textwidth}
\begin{align}
(Breaks: 0.9) &\leftarrow Throws(Suzy).\label{suzy}\\
(Breaks:0.8) &\leftarrow Throws(Billy).\label{billy}
\end{align}
\end{minipage}
\\

The first two laws are {\em vacuous} (i.e., they will be applied in every story) and {\em deterministic} (i.e., they have only one possible outcome, where we leave implicit the probability $1$). The last two laws are {\em non-deterministic}, causing either the bottle to break or nothing at all.

The given theory summarizes all possible {\em stories} that can take place in this model. For example, it allows for the story consisting of the following chain of events:

\begin{quote}
Suzy and Billy both throw a rock at a bottle. Suzy's rock gets there first, shattering the bottle. However Billy's throw was also accurate, and would have shattered the bottle had it not been preempted by Suzy's throw.
\end{quote}
To formalize this idea, the semantics of CP-logic uses {\em probability trees} \cite{shafer:book}. For this example, one such tree is shown in Figure \ref{fig:tree}. Here, each node represents a state of the domain, which is characterized by an assignment of truth values to the atomic formulas, in this case $Throws(Suzy)$, $Throws(Billy)$ and $Breaks$. In the initial state of the domain (the root node), all atoms are assigned their {\em default} value $\false$. In this example, the bottle is initially unbroken and the rocks are still in Billy and Suzy's hands. The children of a node $x$ are the result of the application of a law: each edge $(x,y)$ corresponds to a specific disjunct that was chosen from the head of the law that was applied in node $x$. In this particular case, law \eqref{suzy1} is applied first, so the assignment in the child-node is obtained by setting $Throws(Suzy)$ to $\true$, its {\em deviant} value. The third state has two child-nodes, corresponding to law \eqref{suzy} being applied and either breaking the bottle (left child) or not (right child). The leftmost branch is thus the formal counterpart of the above story, where the last edge represents the fact that Billy's throw was also accurate, even though there was no bottle left to break. A branch ends when no more laws can be applied.
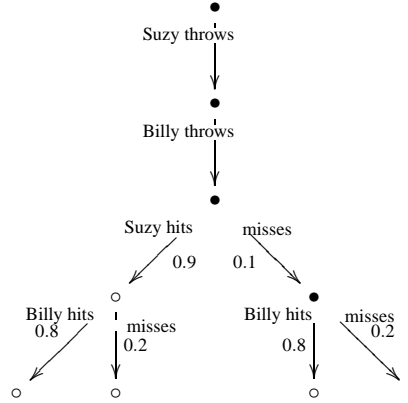
\begin{figure}
\centerline{
\xymatrix@H=0.2cm@C=3em{
&&\bullet \ar[d]|(.3){\text{Suzy throws\hspace*{0.7cm}}}\\ 
&&\bullet \ar[d]|(.3){\text{Billy throws\hspace*{0.7cm}}}\\ 
&&\bullet \ar[ld]^{0.9}|(.3){\text{Suzy hits\hspace*{0.7cm}}} \ar[rd]_{0.1}|(.3){\text{\hspace*{0.6cm}misses}}\\
&\circ \ar[ld]_{0.8}|(.2){\text{Billy hits\hspace*{0.95cm}}} \ar[d]^{0.2}|(.3){\hspace*{0.95cm}\text{misses}} && \bullet \ar[d]_{0.8}|(.2){\text{Billy hits}\hspace*{0.95cm}} \ar[rd]^{0.2}|(.2){\hspace*{0.95cm}\text{misses}} \\
\circ &\circ &&\circ &\bullet \\
}}
\caption{Probability tree for Suzy and Billy.\label{fig:tree}}
\label{fig:tree}
\end{figure}

A probability tree of a theory $T$ in CP-logic defines an {\em a priori} probability distribution $P_T$ over all things that might happen in this domain, which can be read off the leaf nodes of the branches by multiplying the probabilities on the edges. For instance, the probability of the bottle breaking is the sum of the probabilities of the leaves in which $Breaks$ is $\true$ -- the white circles in Figure \ref{fig:tree} -- giving $0.98$. We have shown here only one such probability tree, but we can construct others as well by applying the laws in different orders. 

An important property however is that all trees defined by the same theory result in the same probability distribution. To ensure that this property holds even when there are bodies containing negative literals, CP-logic makes use of the well-founded semantics. Simply put, this means the condition for a law to be applied in a node is not merely that its body is currently satisfied, but that it will remain so. This implies that a negated atom in a body should not only be currently assigned $\false$, but actually has to have become impossible, so that it will remain $\false$ through to the end-state. For atoms currently assigned $\true$, it always holds that they remain $\true$, hence here there is no problem.

\subsubsection{Counterfactual Probabilities}

In the context of structural equations, \cite{pearl:book} studies counterfactuals and shows how they can be evaluated by means of a syntactic transformation. In their study of actual causation and explanations, \cite[p. 27]{halpernpearl05b} also define counterfactual probabilities (i.e., the probability that some event would have had in a counterfactual situation). \cite{vennekens:jelia} present an equivalent method for evaluating counterfactual probabilities in CP-logic, also making use of syntactic transformations. 

Assume we have a branch $b$ of a probability tree of some theory $T$. To make $T$ deterministic {\em in accordance with the choices made in $b$}, we transform $T$ into $T^b$ by replacing the heads of the laws that were applied in $b$ with the disjuncts that were chosen from those heads in $b$. For example, if we take as branch $b$ the previous story, then $T^b$ would be:

\begin{minipage}{0.4\textwidth}
\begin{align*}
Throws(Suzy) &\leftarrow.\\
Throws(Billy) &\leftarrow.
\end{align*}
\end{minipage}
\begin{minipage}{0.5\textwidth}
\begin{align*}
Breaks &\leftarrow Throws(Suzy).\\
Breaks &\leftarrow Throws(Billy).
\end{align*}
\end{minipage}
\\

We will use Pearl's $do()$-operator to indicate an intervention \cite{pearl:book}. The intervention on a theory $T$ that ensures variable $C$ remains false, denoted by $do(\lnot C)$, removes $C$ from the head of any law in which it occurs, yielding $T|do(\lnot C)$. For example, to prevent Suzy from throwing, the resulting theory $T|do(\lnot Throws(Suzy))$ is given by:

\begin{minipage}{0.4\textwidth}
\begin{align*}
&\leftarrow.\\
Throws(Billy) &\leftarrow.
\end{align*}
\end{minipage}
\begin{minipage}{0.5\textwidth}
\begin{align*}
(Breaks: 0.9) &\leftarrow Throws(Suzy).\\
(Breaks:0.8) &\leftarrow Throws(Billy).
\end{align*}
\end{minipage}
\\

Laws with an empty head are ineffective, and can thus simply be omitted. The analogous operation $do(C)$ on a theory $T$ corresponds to adding the deterministic law $C \leftarrow$.

With this in hand, we can now evaluate a Pearl-style counterfactual probability ``given that $b$ in fact occurred, the probability that $\lnot E$ would have occurred if $\lnot C$ had been the case'' as $P_{T^b}(\lnot E | do(\lnot C))$.

\section{Defining Actual Causation Using CP-logic}\label{sec:def}

We now formulate a general, parametrized definition of actual causation, which can accommodate several concrete definitions by filling in details that we first leave open. We demonstrate this using definitions by Hall and one by ourselves. For the rest of the paper, we assume that we are given a CP-theory $T$, an actual story $b$ in which both $C$ and $E$ occurred, and we are interested in whether or not $C$ caused $E$. By $Con$ we denote the quadruple $(T,b,C,E)$, and refer to this as a {\em context}.

\subsection{Actual Causation in General}

For reasons of simplicity, the majority of approaches (including Hall) only consider actual causation in a deterministic setting. Further, it is taken for granted that the actual values of all variables are given. In such a context, counterfactual dependence of the event $E$ on $C$ is expressed by the conditional: {\em if $do(\lnot C)$ then $\lnot E$}, where it is assumed that all exogenous variables take on their actual values. In our probabilistic setting, the latter translates into making those laws that were actually applied deterministic, in accordance with the choices made in the story. However in many cases some exogenous variables simply do not have an actual value to start with. For example, if Suzy is prevented from throwing her rock, then we cannot say what the accuracy would have been had she done so. In CP-logic, this would be represented by the fact that law \eqref{suzy} is not applied. Hence, in a more general setting, it is required only that $do(\lnot C)$ makes $\lnot E$ possible. In other words, we get a probabilistic definition of counterfactual dependence:

\begin{definition}[Dependence] $E$ is {\em counterfactually dependent} on $C$ in $(T,b)$ iff  $P_{T^b}(\lnot E | do(\lnot C)) > 0$.\label{def:cd}
\end{definition}  

As counterfactual dependency lies at the heart of causation for all of the approaches we are considering, Dependence represents the most straightforward definition of actual causation. It is however too crude and allows for many counterexamples, preemption being the most famous. 

More refined definitions agree with the general structure of the former, but modify the theory $T$ in more subtle ways than $T^b$ does. We identify two different kinds of laws in $T$, that should each be treated in a specific way. The first are the laws that are {\em intrinsic} with respect to the given context. These should be made deterministic in accordance with $b$. The second are laws that are {\em irrelevant} in the given context. These should simply be ignored. Together, the methods of determining which laws are intrinsic and irrelevant, respectively, will be the parameters of our general definition. Suppose we are given two functions $Int$ and $Irr$, which both map each context $(T,b,C,E)$ to a subset of the theory $T$. With these, we define actual causation as follows:

\begin{definition}[Actual causation given $Int$ and $Irr$] Given the context $Con$, we define that $C$ is an actual cause of $E$ if and only if $E$ is counterfactually dependent on $C$ according to the theory $T'$ that we construct as: 

$T' = [T \setminus (Irr(Con) \cup Int(Con))] \cup Int(Con)^b$. \label{def:actc}
\end{definition}

For instance, the naive approach that identifies actual causation with counterfactual dependence corresponds to taking $Irr$ as the constant function $\{\}$ and $Int(Con)$ as $\{ r \in T \mid r$ was applied in $b\}$. From now on, we use the following, more legible notation for a particular instantiation of this definition:

\begin{dependenceirrelevant}
No law $r$ is irrelevant.
\end{dependenceirrelevant}

\begin{dependenceintrinsic}
A law $r$ is intrinsic iff $r$ was applied in $b$. 
\end{dependenceintrinsic}

If desired, we can order different causes by their respective counterfactual probabilities, as this indicates how important the cause was for $E$.

\subsection{Beckers and Vennekens 2012 Definition}

A recent proposal by the current authors for a definition of actual causation was originally formulated in \cite{vennekens11}, and later slightly modified in \cite{beckers}. Here, we summarize the basic ideas of the latter, and refer to it as {\em BV12}. We reformulate this definition in order to fit into our framework. It is easily verified that both versions are equivalent.

Because we want to follow the actual story as closely as possible, the condition for intrinsicness is exactly like before: we force all laws that were applied in $b$ to have the same effect as they had in $b$.

To decide which laws were relevant for causing $E$ in our story, we start from a simple temporal criterion: every law that was applied after the effect $E$ took place is irrelevant, and every law that was applied before isn't. For example, to figure out why the bottle broke in our previous example, law \eqref{billy} is considered irrelevant, because the bottle was already broken by the time Billy's rock arrived. For laws that were not applied in $b$, we distinguish laws that could still be applied when $E$ occurred, from those that could not. The first are considered irrelevant, whereas the second aren't.  This ensures that any story $b'$ that is identical to $b$ up to and including the occurrence of $E$ provides the same judgements about the causes of $E$, since any law that is not applied in $b$ but is applied in $b'$, must obviously occur after $E$.

\begin{bv12irrelevant}
A law $r$ is irrelevant iff $r$ was not applied before $E$ in $b$, although it could have. (I.e., it was not impossible at the time when $E$ occurred.)
\end{bv12irrelevant}

\begin{bv12intrinsic}
A law $r$ is intrinsic iff $r$ was applied in $b$. 
\end{bv12intrinsic}

\subsection{Hall 2007}

One of the currently most refined concepts of actual causation is that of \cite{hall07}. Although Hall uses structural equations as a practical tool, he is of the opinion that intuitions about actual causation are best illustrated using neuron diagrams. A key advantage of these diagrams, which they share with CP-logic, is that they distinguish between a default and deviant state of a variable. Proponents of structural equations, on the other hand, countered Hall's approach by criticizing neuron diagrams' limited expressivity \cite[p. 398]{hitchcock09}. Indeed, a neuron diagram, and thus Hall's approach as well, is very limited in the kind of examples it can express. In particular, neuron diagrams can only express deterministic causal relations and they also lack the ability to directly express \emph{causation by omission}, i.e., that the absence of $C$ causes $E$. Hall's solution is to argue against causation by omission altogether. By contrast, we will offer an improvement of Hall's account that generalizes to a probabilistic context, and can also handle causation by omission. In short, we propose CP-logic as a way of overcoming the shortcomings of both structural equations and neuron diagrams.

In a neuron diagram, a neuron can be in one of two states, the default ``off" state and the deviant ``on" state in which the neuron ``fires". Different kinds of links define how the state of one node affects the other. For instance, in (a), $E$ fires iff at least one of $B$ or $D$ fires, $D$ fires iff $C$ fires, and $B$ fires iff $A$ fires and $C$ doesn't fire. Nodes that are ``on" are represented by full circles and nodes that are ``off" are shown as empty circles. 
\begin{center}
\hspace{\stretch{0.5}}
\parbox[c]{3cm}{
\begin{pgfpicture}
\pgfsetyvec{\pgfxy(0,1)}
\pgfnodecircle{a}[fill]{\pgfxy(0,1)}{0.25cm}
\pgfnodecircle{b}[stroke]{\pgfxy(1,1)}{0.25cm}
\pgfnodecircle{e}[fill]{\pgfxy(2,1)}{0.25cm}
\pgfnodecircle{c}[fill]{\pgfxy(0,0)}{0.25cm}
\pgfnodecircle{d}[fill]{\pgfxy(1,0)}{0.25cm}
\pgfsetendarrow{\pgfarrowto}
\pgfnodeconnline{a}{b}
\pgfnodeconnline{b}{e}
\pgfnodeconnline{c}{d}
\pgfnodeconnline{d}{e}
\pgfsetendarrow{\pgfarrowdot}
\pgfnodeconnline{c}{b}
\color{white}
\pgfputat{\pgfnodecenter{a}}{\pgfbox[center,center]{A}}
\pgfputat{\pgfnodecenter{c}}{\pgfbox[center,center]{C}}
\pgfputat{\pgfnodecenter{d}}{\pgfbox[center,center]{D}}
\pgfputat{\pgfnodecenter{e}}{\pgfbox[center,center]{E}}
\color{black}
\pgfputat{\pgfnodecenter{b}}{\pgfbox[center,center]{B}}
\end{pgfpicture}
\captionof{figure}{(a)}}
\hspace{\stretch{0.5}}
\parbox[c]{5.0cm}{
\begin{pgfpicture}
\pgfsetyvec{\pgfxy(0,1)}
\pgfnodecircle{a}[stroke]{\pgfxy(0,1)}{0.25cm}
\pgfnodecircle{b}[stroke]{\pgfxy(1,1)}{0.25cm}
\pgfnodecircle{e}[fill]{\pgfxy(2,1)}{0.25cm}
\pgfnodecircle{c}[fill]{\pgfxy(0,0)}{0.25cm}
\pgfnodecircle{d}[fill]{\pgfxy(1,0)}{0.25cm}
\pgfsetendarrow{\pgfarrowto}
\pgfnodeconnline{a}{b}
\pgfnodeconnline{b}{e}
\pgfnodeconnline{c}{d}
\pgfnodeconnline{d}{e}
\pgfsetendarrow{\pgfarrowdot}
\pgfnodeconnline{c}{b}
\color{white}
\pgfputat{\pgfnodecenter{c}}{\pgfbox[center,center]{C}}
\pgfputat{\pgfnodecenter{d}}{\pgfbox[center,center]{D}}
\pgfputat{\pgfnodecenter{e}}{\pgfbox[center,center]{E}}
\color{black}
\pgfputat{\pgfnodecenter{a}}{\pgfbox[center,center]{A}}
\pgfputat{\pgfnodecenter{b}}{\pgfbox[center,center]{B}}
\end{pgfpicture}
\captionof{figure}{(b)}}
\end{center}
Diagrams (a) and (b) represent the same causal structure, but different stories: in both cases there are two causal chains leading to $E$, one starting with $C$ and another starting with $A$. But in (a) the chain through $B$ is preempted by $C$, whereas in (b) there is nothing for $C$ to preempt, as $A$ doesn't even fire. Therefore (a) is an example of what is generally known as Early Preemption, whereas (b) is not.

Although Hall presents his arguments using neuron diagrams, his definitions are formulated in terms of structural equations that correspond to such diagrams in a straightforward way: for each endogenous variable there is one equation, which contains a propositional formula on the right concisely expressing the dependencies of the diagram.

Any structural model $M$ can also be read as a CP-logic theory $T$. The firing of the neurons and the resulting assignment to the variables in $M$, then correspond to a story $b$. 

One important difference between structural equations and CP-laws, is that we are not limited to using a single CP-law for each variable. As each law represents a separate causal mechanism, and only one mechanism can actually make a variable become $\true$, we will represent dependencies such as that of $E$ by three laws, corresponding to the three different ways in which $B$ and $D$ can cause $E$: each by themselves, or the two of them simultaneously. At first sight the conjunctive law may seem redundant, but if one has a temporal condition for irrelevance -- eg. BV12 -- then it may not be. The translation of examples (a) and (b) into CP-logic is given by the following CP-theory -- where $p$ and $q$ represent some probabilities:

\begin{minipage}{0.4\textwidth}
\begin{align*}
(A:p) & \leftarrow.\\
(C:q)& \leftarrow.
\end{align*}
\end{minipage}
\begin{minipage}{0.11\textwidth}
\begin{align*}
B& \leftarrow A\land \lnot C.\\
D& \leftarrow C.
\end{align*}
\end{minipage}
\begin{minipage}{0.5\textwidth}
\begin{align*}
E& \leftarrow B.\\
E& \leftarrow D.\\
E& \leftarrow B \land D.
\end{align*}
\end{minipage}

The idea behind Hall's definition is to check for counterfactual dependence in situations which are reductions of the actual situation, where a reduction is understood as ``a variant of this situation in which {\em strictly fewer} events occur". In other words, because the counterfactual dependence of $E$ on $C$ can be masked by the occurrence of events which are extrinsic to the actual causal process, we look at all possible scenario's in which there are less of these extrinsic events. Hall puts it like this \cite[p. 129]{hall07}:

\begin{quote}
Suppose we have a causal model for some situation. The model consists of some equations, plus a specification of the actual values of the variables. Those values tell us how the situation {\em actually} unfolds. But the same system of equations can also represent {\em nomologically possible variants}: just change the values of one or more exogenous variables, and update the rest in accordance with the equations. A good model will thus be able to represent a range of variations on the actual situation. Some of these variations will be -- or more accurately, will be modeled as -- {\em reductions} of the actual situation, in that every variable will either have its actual value or its default value. Suppose the model has variables for events $C$ and $E$. Consider the conditional 

\begin{center}{\bf if {\em C} = 0; then {\em E} = 0}\end{center}

This conditional may be true; if so, $C$ is a cause of $E$. Suppose instead that it is false. Then $C$ is a cause of $E$ iff there is a reduction of the actual situation according to which $C$ and $E$ still occur, and in which this conditional is true.
\end{quote}

Rather than speaking of fewer events occuring, in this definition Hall characterizes a reduction in terms of whether or not variables retain their actual value. This is because in the context of neuron diagrams, an event is the firing of a neuron, which is represented by a variable taking on its deviant value, i.e., the variable {\em becoming true}. In the dynamic context of CP-logic, the formal object that corresponds most naturally to Hall's informal concept of an event is the transition in a probability tree (i.e., the application of a causal law) that makes such a variable true. Therefore we take a reduction to mean that no law is applied such that it makes a variable true that did not become true in the actual setting.

To make this more precise, we introduce some new formal terminology. Let $d$ be a branch of a probability tree of the theory $T$. $Laws_d$ denotes the set of all laws that were applied in $d$. The resulting effect of the application of a law $r \in Laws_d$ -- i.e., the disjunct of the head which was chosen -- will be denoted by $r_d$, or by $0$ if an empty disjunct was chosen. The set of true variables in the leaf of $d$ will be denoted by $Leaf_d$.  

A branch $d$ is a {\em reduction} of $b$ iff $\forall r \in Laws_d: r_d = 0 \lor  \exists s \in Laws_b: r_d = s_b$. Or, equivalently, $Leaf_d \subseteq Leaf_b$.

A reduction of $b$ in which both $C$ and $E$ occur -- i.e., hold in its leaf -- will be called a $(C,E)$-reduction. The set of all of these will be denoted by $Red_b^{(C,E)}$. These are precisely the branches which are relevant for Hall's definition.
\begin{definition} \label{def5} We define that $C$ is an {\em actual cause} of $E$ iff 
$(\exists d \in Red_b^{(C,E)}: P_{T^{d}}(\lnot E | do(\lnot C))>0)$.
\end{definition}
Theorem 1 shows the correctness of our translation. Proofs of all theorems can be found in the Appendices.

\begin{theorem} Given a neuron diagram with its corresponding equations $M$, and an assignment to its variables $V$. Consider the CP-logic theory $T$ and story $b$ that we get when applying the translation discussed above. Then $C$ is an actual cause of $E$ in the diagram according to Hall's definition iff $C$ is an actual cause of $E$ in $b$ and $T$ according to Definition \ref{def5}.
\end{theorem}

At first sight, Definition \ref{def5} does not fit into the general framework we introduced earlier, because of the quantifier over different branches. However, we will now show that for a significant group of cases it actually suffices to consider just a single $T'$, which can be described in terms of irrelevant and intrinsic laws. 

Rather than looking at all of the reductions separately, we single out a minimal structure which contains the essence of our story. In general such a minimal structure need not be unique, as the story may contain elements none of which are necessary by themselves yet without all of them the essence is changed. The following makes this more precise.

\begin{definition}
A law $r$ is {\em necessary}  iff
\begin{itemize}
\item{$\forall d \in Red_b^{(C,E)}: r \in Laws_d$ and}
\item{$\forall d,e \in Red_b^{(C,E)}: r_d = r_e$.}
\end{itemize}
We define $Nec(b)$ as the set of all necessary laws.
\end{definition}

In general it might be that there are two (or more) edges which are unnecessary by themselves, but at least one of them has to be present. Consider for example a case where $C$ causes both $A$ and $B$, and each of those in return is sufficient to cause $E$. Then neither the law $r= A:... \leftarrow C$ nor the law $r' = B:... \leftarrow C$ is necessary, yet at least one of them has to be applied to get $E$. In cases where this complication does not arise, we shall say that the story is simple. 

\begin{definition}
A story $b$ is {\em simple} iff the following holds:
\begin{itemize}
\item $\forall r \in Laws_b:$ the head of $r$ contains at most two disjuncts;
\item $\forall d \in Red_b^{(C,E)}$, for all non-deterministic $r \in Laws_d \setminus Nec(b): \exists e \in Red_b^{(C,E)}$ so that $e = d$ up to the application of $r$, and $r_d \neq r_e$.
\end{itemize}
\end{definition}

As an example, note that the story in the previous paragraph is not simple. Neither law $r$ nor $r'$ is necessary. Now consider the $(C,E)$-reduction $d$ where first $r'$ fails to cause $B$, followed by $r$ causing $A$, which in turn causes $E$. The branch that is identical to $d$ up to and including the application of $r'$ but in which $r$ does not cause $A$, is not a $(C,E)$-reduction.

We are now in a position to formulate a theorem that will allow us to adjust Hall's definition into our framework.

\begin{theorem}
If $(\exists d \in Red_b^{(C,E)}: P_{T^{d}}(\lnot E | do(\lnot C))>0)$ then $P_{T^{Nec(b)}}(\lnot E | do(\lnot C))>0$. If $b$ is simple, then the reverse implication holds as well. 
\end{theorem}

It is possible to add an additional criterion to turn this theorem into an equivalence that also holds for non-simple stories. We choose not to do this, because all of the examples Hall discusses are simple, as are all of the classical examples discussed in the literarure, such as Early and Late Preemption, Symmetric Overdetermination, Switches, etc. 

As a result of this theorem, rather than having to look at all $(C,E)$-reductions and calculate their associated probabilities, we need only find all the necessary laws and calculate a single probability. If the story $b$ is simple, then this probability represents an extension of Hall's definition, since they are equivalent if one ignores the value of the probability but for it being $0$ or not. To obtain a workable definition of actual causation, we present a more constructive description of necessary laws. From now on we call the node resulting from the application of a law $r$ in $b$ $Node_r^b$. 

\begin{theorem}
If $b$ is simple, then a non-deterministic law $r$ is necessary iff there is no $(C,E)$-reduction passing through a sibling of $Node_r^b$.
\end{theorem}

With this result, we can finally formulate our version of Hall's definition, which we will refer to as Hall07.

\begin{hall07irrelevant}
No law $r$ is irrelevant.
\end{hall07irrelevant}

\begin{hall07intrinsic}
A law $r$ is intrinsic iff $r$ was applied in $b$, and there is no branch $d$ passing through a sibling of $Node_r^b$ such that $\{ C,E \} \subseteq Leaf_d \subseteq Leaf_b$.
\end{hall07intrinsic}

\subsection{Hall 2004 Definitions}

\cite{hall04} claims that it is impossible to account for the wide variety of examples in which we intuitively judge there to be actual causation by using a single, all-encompassing definition. Therefore he defines two different concepts which both deserve to be called forms of causation but are nonetheless not co-extensive. 

\subsubsection{Dependence}

The first of these is simply Dependence, as stated in Definition \ref{def:cd}. As mentioned earlier, Hall only considers deterministic causal relations, and thus the probabilistic counterfactual will either be $1$ or $0$.

\subsubsection{Production}

The second concept tries to express the idea that to cause something is to bring it about, or to {\em produce} it. The original, rather technical, definition can be found in the appendices, but the following informal version suffices for our purposes: $C$ is a producer of $E$ iff there is a directed path of firing neurons in the diagram from $C$ to $E$. In our framework, this translates to the following. 

\begin{productionirrelevant}
A law $r$ is irrelevant iff $r$ was not applied before $E$ in $b$, or if its effect was already $\true$ when it was applied.
\end{productionirrelevant}

\begin{productionintrinsic}
A law $r$ is intrinsic iff $r$ was applied in $b$. 
\end{productionintrinsic} 

\begin{theorem} \label{th1}
Given a neuron diagram with its corresponding equations $M$, and an assignment to its variables $V$. Consider the CP-logic theory $T$, and a story $b$, that we get when applying the translation discussed earlier. $C$ is a producer of $E$ in the diagram according to Hall iff $C$ is a producer of $E$ in $b$ and $T$ according to the CP-logic version stated here.
\end{theorem}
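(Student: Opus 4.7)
The plan is to unpack the theory $T' = [T \setminus (\text{Irr} \cup \text{Int})] \cup \text{Int}^b$ induced by the Production definitions in the concrete case of a neuron-diagram translation, and to match its minimal model against the graph-theoretic notion of a directed path of firing neurons. Here $\text{Int}$ is exactly the set of laws applied in $b$, while $\text{Irr}$ covers every remaining law together with those applied laws whose head atom was already $\true$ upon application, so $T'$ collapses to a deterministic theory that records, for each firing neuron, the specific rule that first produced it in the actual story. Consequently, testing whether $P_{T'}(\lnot E \mid do(\lnot C)) > 0$ reduces to the purely deterministic question of whether, after severing $C$ from $T'$, the remaining chain of productive rules still derives $E$.

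For the forward direction, I would suppose Hall's condition holds, so there is a directed path $C = X_0, X_1, \ldots, X_n = E$ of firing neurons in the diagram. For each $i \geq 1$, let $r_i$ be the productive CP-law whose application in $b$ first made $X_i$ become $\true$; by the translation of diagram edges into CP-law bodies, $X_{i-1}$ appears positively in the body of $r_i$. I would then argue by induction along the path that each $X_i$ is $\false$ in the minimal model of $T' | do(\lnot C)$: the base case is $C$, and in the inductive step the rule $r_i$ loses its support when $X_{i-1}$ is falsified, while any alternative rule for $X_i$ would, by the definition of productivity, itself have had to succeed in $b$, contradicting the choice of $r_i$ as \emph{the} productive law for $X_i$. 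Hence $E$ is $\false$, giving the desired positive counterfactual probability. For the converse, I would start from the hypothesis that $T' | do(\lnot C)$ falsifies $E$ and propagate support-loss backwards: whenever a variable is $\false$ in the intervened model but was $\true$ in $b$, each surviving productive rule for it must have a body literal that has itself been falsified; picking one such body literal at each step yields a finite chain of firing variables terminating at $C$, which is precisely a directed path of firing neurons from $C$ to $E$.

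The main obstacle I anticipate is the bookkeeping that arises when several CP-laws share the same head, as in the translation $E \leftarrow B$, $E \leftarrow D$, $E \leftarrow B \land D$ of the diagram's disjunctive equation for $E$: the argument must single out the specific productive law used in $b$ and show that it is exactly this law that the neuron-diagram semantics tracks when it attributes the firing of $E$ to a particular parent. A related subtlety is the handling of inhibitory edges, which translate into negative body literals; here I would appeal to the well-founded semantics described in Section~\ref{sec:cp} to verify that the intervention $do(\lnot C)$ cannot spuriously revive an inhibited neuron and thereby supply an unintended productive rule for $E$.
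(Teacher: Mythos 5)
Your proposal gets one structural point right, and it is the same point the paper's proof starts from: under the Production criteria, $T'$ is a deterministic theory containing exactly one surviving law for each variable that actually became true, so the test $P_{T'}(\lnot E \mid do(\lnot C))>0$ reduces to a deterministic derivability question along a unique chain of laws. The genuine gap is on the neuron-diagram side of the equivalence. You take ``producer according to Hall'' to mean the informal gloss ``there is a directed path of firing neurons from $C$ to $E$,'' but the theorem is about Hall's actual definition, which the paper's appendix quotes and works with in full: unique minimally sufficient sets at every intermediate time, pure causal histories, proximate causation defined via a nomologically possible structure $S'$ that \emph{intrinsically matches} a structure $S$ containing $C$ and $E$ (so the history may exist only in a different diagram $D'$, whose corresponding branch $d$ and theory $T'_d$ must then be related back to $T'_b$), and production as the transitive closure of proximate causation. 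Relating the counterfactual on $T'$ to these notions -- and, in the converse direction, exhibiting an explicit pure causal history $S_0,S_1$ certifying proximate causation for each link of the extracted chain of laws -- is the bulk of the paper's proof and is entirely absent from yours; the equivalence of the path gloss with Hall's technical definition is not something a proof of this theorem may presuppose.

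Moreover, even granting the path gloss, your forward induction does not go through as stated. From a directed path $X_0,\dots,X_n$ of firing neurons you infer that $X_{i-1}$ occurs positively in the body of the law $r_i$ that first made $X_i$ true in $b$, but nothing guarantees this: when $X_i$ has two firing parents (a disjunctive diagram equation translated into several CP-laws), the law that actually produced $X_i$ in the story may mention only the other parent, and then falsifying $X_{i-1}$ leaves $X_i$, and hence $E$, derivable in $T'\mid do(\lnot C)$. Your rebuttal that ``any alternative rule would itself have had to succeed in $b$'' misses the problem, which is not that an alternative rule survives into $T'$ but that the single surviving rule $r_i$ may bypass your chosen path altogether. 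This overdetermination situation is exactly where the informal path criterion, Hall's intrinsic-matching clause, and the branch-dependence of the CP-logic notion interact, and it is where the argument must do real work rather than induct along an arbitrary firing path.
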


Besides providing a probabilistic extension, the CP-logic version of production also offers a way to make sense of causation by omission. That is, just as with all of the definitions in our framework in fact, we can extend it to allow negative literals such as $\lnot C$ to be causes as well. 

\section{Comparison}\label{sec:summary}

Table \ref{table:overview} presents a schematic overview of the four definitions discussed so far, as well as two new ones, that we give appropriate names. The columns and rows give the criteria for a law $r$ of $T$ to be considered intrinsic, respectively irrelevant, in relation to a story $b$, and an event $E$. By $r \leq_b E$, we denote that $r$ was applied in $b$ before $E$ occurred.

\begin{table}[ht] 
\caption{Spectrum of definitions} 
\centering 
\begin{tabular}{ c|  c  c} 
\hline\hline 
Irrelevant &  \multicolumn{2}{c}{Intrinsic} \\

 & $r \in Laws_b$ & $r \in Nec(b)$ \\ [0.5ex] 
\hline 
$\emptyset$ & Dependence  & Hall07\\
$\exists d: (d=b \text{ up to }E) \land r \geq_d E$& BV12 & BV07 \\ 
$r \nless_b E \lor r_b <_b r$ & Production & Production07 \\
\hline 
\end{tabular} 
\label{table:overview} 
\end{table} 

In order to illustrate the working of the definitions and to highlight their differences, we present an example:

\begin{quote}
$Assassin$ decides to poison the meal of a victim, who subsequently $Dies$ right before dessert. However, $Murderer$ decided to murder the victim as well, so he poisoned the dessert. If $Assassin$ had failed to do his job, then $Backup$ would have done so all the same. 
\end{quote}

The causal laws that form the context of this story are give by the following theory:

\begin{minipage}{0.4\textwidth}
\begin{align*}
(Assassin: p) &\leftarrow.\\
(Murderer: q) &\leftarrow.\\
(Backup: r) &\leftarrow \lnot Assassin.
\end{align*}
\end{minipage}
\begin{minipage}{0.5\textwidth}
\begin{align*}
Dies &\leftarrow Assassin.\\
Dies &\leftarrow Backup.\\
Dies &\leftarrow Murderer.
\end{align*}
\end{minipage}
\\ \\
In this story, did $Assassin$ cause $Dies$? We leave it to the reader to verify that in this case the left intrinsicness condition from the table applies to the first two non-deterministic laws, whereas the right one only applies to the first. The second irrelevance condition only applies to the last law, whereas the third one applies to the last two laws and to the third. This results in the following probabilities representing the causal status of $Assassin$:\\

\begin{tabular}{ c  c  c  c}
\hline
 Production & BV12 & Hall07 & Dependence\\
\hline
$1$ & $1-r$ & $(1-r)*(1-q)$ & $0$\\
\hline 
\end{tabular} 
\\ \\
Different motivations can be provided for these answers:

\begin{itemize}
\item {\bf Production}:  $Assassin$ brought about the death of the victim all by himself, hence he is the full cause.
\item {\bf BV12}: If $Assassin$ hadn't killed him, then that omission itself would not have lead to victim's death with a probability of $(1-r)$. Hence, $Assassin$ is a cause of the death to this extent.
\item {\bf Hall07}: Ignoring the actually redundant $Murderer$, if $Assassin$ doesn't kill him, then there is a $(1-r)*(1-q)$ probability that the victim will die. Hence he is the cause to that extent.
\item {\bf Dependence}: The victim would have died anyway, so $Assassin$ is not a cause at all.
\end{itemize}

Rather than saying that only one of these answers is correct, we prefer to think of them as answering different questions, all of which have their use in some context or other. (Eg., to determine responsibility, understand Assassin's state of mind, minimize the chance of murders, etc.) More generally, the definitions could be characterized by describing which events are allowed to happen in the counterfactual worlds they take into consideration to judge causation. 

\begin{itemize}
\item {\bf Production}: Only those events -- i.e., applications of laws -- which led to $E$, and not differently -- i.e., with the same outcome as in the actual story.
\item {\bf BV12}: Those events which led to $E$, and not differently, and also those events which were prevented from happening by these.
\item {\bf Hall07}: Any event can happen, as long as those events that were essential to lead to $E$ do not happen differently.
\item {\bf Dependence}: Any event can happen, as long as those events that did actually happen do not happen differently.
\end{itemize}

\section{Conclusion}

In this paper we have used the formal language of CP-logic to formulate a general definition of actual causation, which we used to express four specific definitions: a proposal of our own, and three definitions based on the work of Hall. By moving from the deterministic context of neuron diagrams to the non-deterministic context of CP-logic, the latter definitions improve on the original ones in two ways: they can deal with a wider class of examples, and they allow for a graded judgment of actual causation in the form of a conditional probability. Further, comparison between the definitions is facilitated by presenting them as various ways of filling in two central concepts. We have illustrated the flexibility of CP-logic in expressing different definitions, opening the path to other proposals beyond the ones here discussed.

\section{Appendices}

To facilitate the proof of the first theorem, we introduce the following lemma.

\begin{lemma} Given a neuron diagram $D$ with its corresponding equations $M$, and an assignment to its variables $V$. Consider the CP-logic theory $T$, and a story $b$, that we get when applying the translation discussed earlier. Then a neuron diagram $R$ is a reduction of $D$ in which both $C$ and $E$ occur iff its translation $d$ -- another branch of $T$ -- is a $(C,E)$-reduction of $b$.
\end{lemma}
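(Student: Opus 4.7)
\emph{Plan.} My plan is to reduce the claim to the leaf-containment characterization already stated in the paper, namely that $d$ is a reduction of $b$ iff $Leaf_d \subseteq Leaf_b$, and then match this condition with Hall's ``actual or default value'' clause on the neuron-diagram side. Under the translation, a neuron $v$ fires exactly when the corresponding atom is $\true$ at the leaf of the associated branch, so the firing pattern of $R$ is in bijection with $Leaf_d$ and the firing pattern of $D$ with $Leaf_b$. In particular ``$C$ and $E$ occur in $R$'' translates directly into $C,E \in Leaf_d$, so the $(C,E)$-clause is handled for free. The substantive step is to show that ``every variable takes either its actual or its default value'' is equivalent to $Leaf_d \subseteq Leaf_b$ in both directions.

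\emph{Neuron $\Rightarrow$ CP.} Starting from a reduction $R$ of $D$ in which $C,E$ fire, I would first observe that for each neuron $v$: if $v$ is off in $D$, then actual and default both equal off, forcing $v$ off in $R$; if $v$ is on in $D$, then $v$ in $R$ is either on (actual) or off (default). Hence the firing neurons in $R$ form a subset of those in $D$, which under the translation is exactly $Leaf_d \subseteq Leaf_b$. The corresponding branch $d$ is obtained by letting each non-deterministic law (representing an exogenous neuron) pick the same disjunct as in $b$ when the neuron fires in $R$, and the empty disjunct otherwise, and then letting the deterministic laws encoding the equations propagate. Combined with $C,E \in Leaf_d$, this gives $d \in Red_b^{(C,E)}$.

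\emph{CP $\Rightarrow$ neuron.} Conversely, given $d \in Red_b^{(C,E)}$, I would let $R$ be the neuron diagram with the same structure as $D$ in which a neuron fires iff the corresponding atom belongs to $Leaf_d$. Because the deterministic laws of $T$ faithfully encode the equations $M$ and $Leaf_d$ satisfies all laws of $T$, the firing pattern of $R$ is consistent with $M$ given the exogenous choices determined by $d$. The inclusion $Leaf_d \subseteq Leaf_b$ then says that each neuron in $R$ is either off (at its default) or on and also on in $D$ (at its actual value), so $R$ is a reduction; and $C,E \in Leaf_d$ ensures both fire in $R$.

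\emph{Main obstacle.} The delicate point will be the redundancy that the translation introduces: the paper supplies several CP-laws per endogenous neuron (e.g.\ the separate laws $E \leftarrow B$, $E \leftarrow D$, and $E \leftarrow B \land D$), so a given firing pattern does not pick out a unique branch. I would address this by arguing at the level of leaf sets rather than at the level of individual branches, using that all of the redundant laws have the same potential effect (making the same atom true); the correspondence between reductions of $D$ and $(C,E)$-reductions of $b$ therefore really lives between firing patterns of $R$ and the sets $Leaf_d$ they induce, which is exactly the level at which the lemma is stated.
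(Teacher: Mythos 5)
Your proposal is correct and follows essentially the same route as the paper's own proof: both rest on the observation that the translation puts the firing pattern of a diagram in correspondence with the leaf of a branch, so that Hall's ``actual or default value'' condition becomes exactly the reduction condition on branches. The only cosmetic difference is that you verify the leaf-containment form $Leaf_d \subseteq Leaf_b$ directly, whereas the paper checks the equivalent law-wise condition ($r_d = 0$ or $\exists s \in Laws_b : r_d = s_b$) by a case split on exogenous (non-deterministic) versus endogenous (deterministic) laws; since the paper itself states these two formulations as equivalent, this changes nothing of substance.
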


\begin{proof}
Assume we have a reduction $R$ of a neuron diagram $D$, and $b$ is the story corresponding to $D$. As $R$ is simply a different assignment of the variables occurring in $D$, brought about by the same equations that existed for $D$, this reduction corresponds to another branch $d$ of $T$, in which $C$ and $E$ hold in its leaf.  Moreover, $R$ can be constructed starting from $D$ by changing some of the exogenous variables, say $U'$, from their actual values to their default value, and then updating the endogenous variables in accordance with the deterministic equations. It being a reduction, this caused no new variables to take on their deviant value in comparison to $D$. Let $r$ be a law that occurs in $d$.

If $r$ is non-deterministic, it must be one of the laws representing an exogenous variable $V$, i.e., a law with an empty body, and hence it was also applied in $b$. $R$ being a reduction, either $V$ has the same value in $R$ as in the original diagram, or it has its default value. In the former case, this means that $r_d = r_b$, in the latter case $r_d = 0$, both of which satisfy the requirement for $d$ being a reduction.

If $r$ is deterministic, the precondition for $r$ has to be fulfilled in $d$, causing some variable $V$ to take on its deviant value. The same must hold true of the precondition for the equation for $V$, and thus $V$ takes on its deviant value in $R$ as well, implying it did so in $D$ too. Therefore there must have been some law applied in $b$ that made $V$ take on its deviant value as well. From this it follows that $d$ is a $(C,E)$-reduction of $b$. 

Now assume we have a theory $T$ and a story $b$ that form the translation of a neuron diagram $D$, such that $C$ and $E$ hold in $b$, and that $d$ is a $(C,E)$-reduction of $b$. As the leaf of $d$ contains an assignment to all of the variables that satisfies the equations of $M$, there is a neuron diagram $R$ that corresponds to $d$. We can easily go over all the previous steps in the other direction, to conclude that $R$ is a reduction of $D$ in which $C$ and $E$ are true. 
\end{proof}

\setcounter{theorem}{1}
\addtocounter{theorem}{-1}

\begin{theorem} Given a neuron diagram with its corresponding equations $M$, and an assignment to its variables $V$. Consider the CP-logic theory $T$ and story $b$ that we get when applying the translation discussed above. Then $C$ is an actual cause of $E$ in the diagram according to Hall's definition iff $C$ is an actual cause of $E$ in $b$ and $T$ according to Definition \ref{def5}.
\end{theorem}

\begin{proof}
We start with the implication from left to right. Assume we have a neuron diagram $D$, in which both $C$ and $E$ fire. This translates into a theory $T$ and a story $b$, for which $C$ and $E$ hold in its leaf. Further, assume there is a reduction $R$ of this diagram, in which both $C$ and $E$ continue to hold, and in this reduction, if {\em C} = 0; then {\em E} = 0. By the above lemma, this translates into a $(C,E)$-reduction of $b$, say $d$. 

In $R$, if {\em C} = 0; then {\em E} = 0. The conditional $C = 0$ is interpreted as a counterfactual locution, and corresponds to $do(\lnot C)$. As there are no non-deterministic laws with non-empty preconditions, $T^d$ is simply the deterministic theory that determines the same assignment as $R$, meaning $P_{T^{d}}(\lnot E | do(\lnot C))=1$, which concludes this part of the proof.

Now assume we have a theory $T$ and a story $b$ that form the translation of a neuron diagram $D$, such that $C$ and $E$ hold in $b$, and that $d$ is a $(C,E)$-reduction of $b$ for which the given inequality holds. By the above lemma, the translation of $d$, say $R$, is reduction of $D$ in which $C$ and $E$ occur. As mentioned in the previous paragraph, $T^d$ simply corresponds to an assignment of values to the variables occurring in $D$ that follows its equations. Since $R$ describes this same assignment, in $R$ too if {\em C} = 0; then {\em E} = 0. This concludes the proof. 
\end{proof}

\begin{theorem}
If $(\exists d \in Red_b^{(C,E)}: P_{T^{d}}(\lnot E | do(\lnot C))>0)$ then $P_{T^{Nec(b)}}(\lnot E | do(\lnot C))>0$. If $b$ is simple, then the reverse implication holds as well. 
\end{theorem}

\begin{proof}
We start with proving the first implication. Assume we have a $d \in Red_b^{(C,E)}$ such that $P_{T^{d}}(\lnot E | do(\lnot C))>0$. This implies that there is at least one branch $e$ of a probability tree of $T^{d}|do(\lnot C)$ for which $\lnot E$ holds in its leaf. We prove by induction on the length of $e$ that this implies the existence of a similar branch $e'$ of a probability tree of $T^{Nec(b)}|do(\lnot C)$ for which $\lnot E$ holds in its leaf, which is what is required to establish the theorem. 

Base case: if $e$ consists of a single node -- i.e., the root node where all atoms are false -- then this means that no laws of $T^{d}|do(\lnot C)$ can be applied. Since the bodies of the laws in $T^{Nec(b)}|do(\lnot C)$ are identical to those of the laws in $T^{d}|do(\lnot C)$, we simply have $e' = e$. 

Induction case: Assume we have a sub-branch $e_n$ of $e$ with length $n > 1$, starting from the root node, and that we also have a structurally identical sub-branch $e'_n$. By it being structurally identical we mean that they are identical except for the fact that they may have different probabilities along the edges.

If $e_n = e$, then no more laws can be applied in the final node of $e_n$. This must then hold for the final node of $e'_n$ as well, so we are finished. Otherwise, we know that there is a sub-branch $e_{n+1}$ which extends $e_n$ along $e$ with a node $O$. Assume that the law which was applied to get to $O$ is $r$. 

If $r$ is deterministic, then $r$ occurs in $T^{d}|do(\lnot C)$ exactly as it does in $T^{Nec(b)}|do(\lnot C)$. Since both branches are structurally identical, $e'_n$ can be extended in the exact same manner as $e_n$, so there has to be a probability tree of $T^{Nec(b)}|do(\lnot C)$ in which there is a sub-branch $e'_{n+1}$ with the desired properties. So assume $r$ is non-deterministic. 

First assume $r \not \in Laws_d$. This implies that $r \not \in Nec(b)$. So as in the deterministic case, $r$ occurs in $T^{d}|do(\lnot C)$ exactly as it does in $T^{Nec(b)}|do(\lnot C)$, and the branch can be extended in the same manner.

Now assume $r \in Laws_d$. If also $r \in Nec(b)$, we know that $r_d = r_b = r_{Nec}$ and hence the previous argument holds. Remains the possibility that $r \not \in Nec(b)$. As in the deterministic case, because $r$ can be applied in the final node of $e'_n$ there has to be a probability tree of $T^{Nec(b)}|do(\lnot C)$ with a sub-branch like $e'_n$ where $r$ is applied next. 

Assume $r_d = A$. Since $A$ was the outcome of $r$ in $d$, the law $r$ as it appears in $T$ -- and also in $T^{Nec(b)}|do(\lnot C)$ -- contains $A$ in its head with some probability attached to it. Therefore the final node of $e'_n$ in the said probability tree has one child-node which contains $A$, extending $e'_n$ into a sub-branch $e'_{n+1}$ with the desired properties. This concludes this part of the proof. 

Now we prove that if $b$ is simple, the reverse implication holds as well. 

Assume $P_{T^{Nec(b)}}(\lnot E | do(\lnot C))>0$. This implies that there is at least one branch $e$ of a probability tree of $T^{Nec(b)}|do(\lnot C)$ for which $\lnot E$ holds in its leaf. We can repeat the first steps of the previous implication, so that we again arrive at a law $r$ which was applied to get to a node $O$.

The branch $e'$ we are considering occurs in a probability tree of a $(C,E)$-reduction, say $f$. First assume $r \in Nec(b)$. By definition, this implies that also $r \in Laws_f \land r_{Nec} = r_f$, and we can apply the reasoning from above. Likewise as above, we can apply this reasoning to all other cases, except the one where $r \not \in Nec(b)$, $r$ is non-deterministic, and $r \in Laws_f$. Assume the law $r$ has effect $A$ in the branch $e$ we are considering. If $r_f = A$, then we are back to our familiar situation, so therefore assume $r_f = B$, and $A \neq B$. 

Since $b$ is simple, $A$ and $B$ are the only two possible effects of $r$. Further, remark that $r \in Laws_b \setminus Nec(b)$. This implies the existence of a $(C,E)$-reduction $g$ that is identical to $f$ up to the application of $r$, but such that $r_g \neq r_f$, and thus $r_g = A = r_e$ meaning there is a branch in a probability tree of $g$ that is structurally identical to $e$ up to $O$. This concludes the proof of the theorem.
\end{proof}

\begin{theorem}
If $b$ is simple, then a non-deterministic law $r$ is necessary iff there is no $(C,E)$-reduction passing through a sibling of $Node_r^b$.
\end{theorem}

\begin{proof}
Say the unique sibling of $Node_r^b$ is $M$. We start with the implication from left to right, so we assume $r$ is necessary. Assume $r_b = A$, then there is no $d \in Red_b^{(C,E)}$ for which $r_d \neq A$, hence there is no $(C,E)$-reduction which passes through $M$.

Remains the implication from right to left. Assume we have a law $r$ such that there is no $(C,E)$-reduction passing through a sibling of $Node_r^b$. We proceed with a reductio ad absurdum, so we assume $r$ is not necessary. 

Clearly $b$ is a $(C,E)$-reduction of itself, and also $r \in Laws_b \setminus Nec(b)$. Hence, by $b$'s simplicity, there is a $(C,E)$-reduction $e$ which is identical to $b$ up to the application of $r$, but for which $r_e \neq r_b$. Thus $e$ passes through the sibling of $Node_r^b$, contradicting the assumption that $r$ is necessary. This concludes the proof. 
\end{proof}

\begin{theorem}
Given a neuron diagram with its corresponding equations $M$, and an assignment to its variables $V$. Consider the CP-logic theory $T$, and a story $b$, that we get when applying the translation discussed earlier. $C$ is a producer of $E$ in the diagram according to Hall iff $C$ is a producer of $E$ in $b$ and $T$ according to the CP-logic version stated here.
\end{theorem}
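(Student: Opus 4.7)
The plan is to leverage the structural correspondence between the CP-logic translation and the neuron diagram, in the spirit of the Lemma used for Theorem 1, which already establishes that applying a law in $b$ corresponds to a neuron firing because of its inputs. First I would unpack $T'$ for the Production instantiation. Since any law not applied in $b$ is already captured by Production-Irr, the set $T \setminus (Irr(Con) \cup Int(Con))$ vanishes and $T' = Int(Con)^b = Laws_b^b$. Among these, an intrinsic-and-irrelevant law (one applied after $E$ in $b$, or one whose effect was already $\true$) cannot actually influence $P_{T'}(\lnot E | do(\lnot C))$: its body can be falsified without harm, and the variable it would set has already been settled by an earlier law. So the counterfactual query is effectively decided by the \emph{productive sublaws} of $T'$, namely those applied before $E$ with a fresh effect.

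Next I would exhibit a canonical correspondence. For each firing neuron $V$ in the diagram, let $r_V$ denote the unique law in $b$ that first made $V$ equal to $\true$; by the translation, the body of $r_V$ is a disjunct of $V$'s neuron equation, whose atoms are exactly those inputs along which $V$'s firing ``flowed'' in $b$. The assignment $V \mapsto r_V$ therefore gives a canonical correspondence between productive CP-laws and links of the firing-neuron sub-graph of the diagram, mirroring the diagram/theory correspondence exploited in the Lemma.

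With this correspondence, both implications are straightforward inductions. For the forward direction, a Hall path $C = V_0, V_1, \ldots, V_n = E$ translates into a chain of productive laws $r_{V_1}, \ldots, r_{V_n}$ in $T'$, each depending on its predecessor; under $do(\lnot C)$, the forcing of $\lnot C$ propagates stepwise, blocking each $r_{V_{i+1}}$ in turn and so preventing $E$. For the reverse direction, given a branch $e$ of $T' | do(\lnot C)$ with $\lnot E$ at its leaf, I would walk backwards from $E$ along the back-pointers $V \mapsto r_V$ to obtain a sequence of firing-in-$b$ variables all false in $e$; since the only variable explicitly forced false by the intervention is $C$, this chain must terminate at $C$, and its translation is a directed path of firing neurons from $C$ to $E$ in the diagram.

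The main obstacle is overdetermination within a single variable. If several disjuncts of $V$'s equation have their firing pattern realised in $b$, the translation produces several laws with head $V$, of which only the first applied truly produces $V$; care is needed to verify that $V \mapsto r_V$ is well-defined and that $r_V$'s body lies on a firing-neuron path in the diagram rather than on an idle in-arrow. CP-logic's sequential application of laws supplies precisely the asymmetry required -- this is also why the conjunctive law must be included in the translation -- and the inductive argument then matches Hall's formal production definition in the appendix once it is unpacked.
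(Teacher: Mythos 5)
Your unpacking of $T'$ is where the argument breaks. Granted, the formula in Definition 2 is ambiguous when $Int$ and $Irr$ overlap, but for the Production instantiation the irrelevant laws must actually be deleted from $T'$ even when they were applied in $b$: the whole point of Production-Irr is to discard laws applied after $E$ or with an already-$\true$ effect, so that (as the paper's proof exploits) $T'$ contains exactly one law for every true variable and hence a unique derivation chain for $E$. Your claim that such ``intrinsic-and-irrelevant'' laws cannot influence $P_{T'}(\lnot E \mid do(\lnot C))$ is false: their bodies need not involve $C$, so under $do(\lnot C)$ they can re-derive $E$ even though the law that actually produced $E$ is blocked -- the justification ``already settled by an earlier law'' only holds in the actual branch, not under the intervention. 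Concretely, in symmetric overdetermination ($C$ and $A$ both fire, laws $E \leftarrow C$, $E \leftarrow A$, $E \leftarrow C \land A$, with $E \leftarrow C$ applied first in $b$), Hall counts $C$ as a producer of $E$, but with your $T' = Laws_b^b$ the retained determinized law $E \leftarrow A$ gives $P_{T'}(\lnot E \mid do(\lnot C)) = 0$; similarly, in the paper's Assassin example keeping $Dies \leftarrow Murderer$ would turn the Production verdict from $1$ into $0$. So your forward direction (``blocking each $r_{V_{i+1}}$ in turn prevents $E$'') does not go through for the theory you constructed; it only works once the irrelevant laws are removed, which is the reading the paper's proof uses. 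Your reverse direction happens to survive the misreading (extra laws only strengthen the hypothesis), and your back-pointer walk plus the first-application argument for well-definedness of $r_V$ essentially reproduce the paper's unique-chain reasoning.

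A second, smaller gap: the theorem concerns Hall's actual definition of production (proximate causation via unique minimally sufficient sets, pure causal histories, intrinsic matching), not just the informal ``directed path of firing neurons'' gloss, and you defer that reconciliation entirely (``once it is unpacked''). In particular, your forward direction never treats the case where the pure causal history of $E$ is realized only in a different nomologically possible diagram $D'$ that intrinsically matches a structure occurring in $D$; the paper handles this explicitly by comparing the chains in $T'_d$ and $T'_b$. And in the reverse direction one must actually verify that each link $V \leftarrow \dots \land W \land \dots$ yields proximate causation, which the paper does by exhibiting $S_0$ (the events in the body) and $S_1 = \{V\}$ as a pure causal history of $V$. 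Closing these two gaps -- especially the first -- is necessary before the proof stands.
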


\begin{proof}
First we need to explain some terminology that Hall uses. A \emph {structure} is a temporal sequence of sets of events, which unfold according to the equations of some neuron diagram. A branch, or a sub-branch, would be the corresponding concept in CP-logic. 

Two structures are said to \emph{match intrinsically} when they are represented in an identical manner. The reason why Hall uses this term, is because even though we use the same variable for an event occurring in different circumstances, strictly speaking they are not the same. This is mainly an ontological issue, which need not detain us for our present purposes.

A set of events $S$ is said to be {\em sufficient} for another event $E$, if the fact that $E$ occurs follows from the causal laws, together with the premisse that $S$ occurs at some time $t$, and no other events occur at this time. A set is {\em minimally} sufficient if it is sufficient, and no proper subset is. To understand this, note that the ambiguity of the relation between an event and the value of a variable that we noted earlier, resurfaces here. In the context of neuron diagrams, events are temporal, and occur during the time-period that a neuron fires, i.e, becomes true. However, at any later time-point, the variable corresponding to this neuron will remain to be true, implying that the value of the variable has shifted in meaning from ``the neuron fires" to ``the neuron has fired". Given this interpretation, it is natural to translate Hall's notion of an event into CP-logic as the application of a law, making a variable true, as we have done.

A further detail to be cleared out, is that in the context of neuron diagrams there can be simultaneous events, since multiple neurons can fire at the same time. In CP-logic, in each node only one law is allowed to be applied, hence this translates to two consecutive edges in a branch. Therefore it is not the case that each node-edge pair in a branch corresponds to a separate time-point, but rather sets of consecutive pairs -- with variable size -- do. Given such a set, then for each variable that was the result of the application of a law belonging to it, it holds that its corresponding event occurs at the next time-point, corresponding to the next set of nodes further down the branch. All the variables occuring in the bodies of the laws in this set, represent events that occur during this time-point.

Now we can state the precise definition of production as it occurs in \cite[p.25]{hall04}. 

\begin{quote}
We begin as before, by supposing that $E$ occurs at $t'$, and that $t$ is an earlier time such that at each time between $t$ and $t'$, there is a unique minimally sufficient set for $E$. But now we add the requirement that whenever $t_0$ and $t_1$ are two such times ($t_0 < t_1$) and $S_0$ and $S_1$ the corresponding minimally sufficient sets, then
\begin{itemize}
\item for each element of $S_1$, there is at $t_0$ a unique minimally sufficient set; and
\item the union of these minimally sufficient sets is $S_0$.
\end{itemize}
...

Given some event $E$ occurring at time $t'$ and given some earlier time $t$, we will say that $E$ has a {\emph pure causal history} back to time $t$ just in case there is, at every time between $t$ and $t'$, a unique minimally sufficient set for $E$, and the collection of these sets meets the two foregoing constraints. We will call the structure consisting of the members of these sets the ``pure causal history" of $E$, back to time $t$.
We will say that $C$ is a proximate cause of $E$ just in case $C$ and $E$ belong to some structure of events $S$ for which there is at least one nomologically possible structure $S'$ such that (i) $S'$ intrinsically matches $S$; and (ii) $S'$ consists of an $E$-duplicate, together with a pure causal history of this $E$-duplicate back to some earlier time. (In easy cases, $S$ will itself be the needed duplicate structure.)
Production, finally, is defined as the ancestral [i.e., the transitive closure] of proximate causation.
\end{quote}

We will start with the implication from left to right. So assume we have a neuron diagram $D$, in which $C$ is a producer of $E$. Say $T$ is the CP-logic theory that is the translation of the equations of the diagram, and $b$ is the branch representing the story. We already know that $C$ and $E$ hold in the leaf of $b$. We need to proof that $P_{T'}(\lnot E | do(\lnot C)) > 0$. The theory $T'$ only contains deterministic laws, and no disjunctions, hence all its laws are of the form: $V \leftarrow A \land A' \land  ... \land \lnot B \land \lnot B'$, where the number of positive literals in the conjunction is at least one. Therefore any probability tree for $T'$ consists out of only one branch, determining a unique assignment for all the variables. Further, even though the theory $T$ may contain several laws in which a variable occurs in the head, because of our irrelevance criterion $T'$ contains exactly one law for every variable that is true. So for every true variable in this assignment, there is a unique chain of laws -- neglecting the order -- which needs to be applied to make this variable true. For any such variable $V$, we will say that it depends on all of the variables occurring positively in the body of a law in this chain. Clearly, if any true variable changes its value in this assignment, then all variables which depend on it become false.

As a first case, assume $C$ is a proximate cause of $E$. We start by assuming that circumstances are nice, meaning that $D$ contains itself a structure $S$ which is a pure causal history of $E$. This means that in the actual story $b$, $C$ is part of a unique minimally sufficient set for $E$. From this it follows that in $T'$, $C$ figures positively in one of the laws on which $E$ depends. Hence, if we apply $do(\lnot C)$, then $E$ will no longer hold. 

Now assume that there is a structure $S$ occurring in $D$, such that there exists another diagram, say $D'$, in which this structure occurs as well, and forms a pure causal history of $E$. This diagram corresponds to a branch of $T$, say $d$, That means that in $T'_d$ -- i.e., the theory $T'$ constructed out $d$ -- $C$ occurs positively in the unique chain of laws which can make $E$ true. But as all events in $S$ also occur in $D$, at the same moments as they do in $D'$, that means that $C$ must also occur positively in the unique chain of laws for $E$ in the theory $T'_b$. Hence, $E$ depends on $C$ in the theory $T'_b$ as well.

Now look at the more general case, in which $C$ occurs in a chain of proximate causes, that leads up to $E$. I.e, in $D$, $C$ is the proximate cause of some variable $V_1$, which in turn is the proximate cause of some variable $V_2$, and so on until we get to $E$. We know from the previous discussion, that this implies in $T'$ that $do(\lnot C)$ then $\lnot V_1$, and $do(\lnot V_1)$ then $\lnot V_2$, and so on. Given what we know about $T'$, it directly follows that when we apply $do(\lnot C)$, then $\lnot E$. This concludes this part of the proof.

We continue with the implication from right to left. So assume that we are given again a neuron diagram and a corresponding story $b$, and that we know $P_{T'}(\lnot E | do(\lnot C)) > 0$. From our earlier analysis of $T'$, we know that this means that $C$ occurs positively in the unique chain of laws that can make $E$ true according to $T'$. From this chain of laws, we start from the one causing $E$ and from there pick out a series that gets us to a law where $C$ occurs positively in the body. More concretely, we take a series of the form: $E \leftarrow ... A \land ... $, $A \leftarrow ... D \land ...$, and so on until we get at a law $Z \leftarrow ... C \land ...$. By definition of production, it suffices to prove that in this chain, each of the variables in the body is a proximate cause of the variable in the head. 

Take such a law $V \leftarrow ... W \land ...$. At the time that this law is applied, $W$ clearly is a member of a sufficient set of events for $V$, which occurs at the next time point. Say $S_0$ is the set of all events that occur together with $W$ that figure in the body of this law, and $S_1$ is the set $\{ V \}$ that occurs at the next time-point, then the structure consisting precisely of $S_0$ and $S_1$ and nothing else forms a pure causal history of $V$ containing $W$. The same reasoning applies to all laws of the chain. This concludes the proof.
\end{proof}

\bibliography{ijcai15_mybib}

\begin{thebibliography}{}

\bibitem[\protect\citeauthoryear{Beckers and Vennekens}{Beckers and
  Vennekens}{2012}]{beckers}
{\sc Beckers, S.} {\sc and} {\sc Vennekens, J.} 2012.
\newblock Counterfactual dependency and actual causation in cp-logic and
  structural models: a comparison.
\newblock In {\em Proceedings of the Sixth STAIRS}, {K.~Kersting} {and}
  {M.~Toussaint}, Eds. Frontiers in Artificial Intelligence and Applications,
  vol. 241. 35--46.

\bibitem[\protect\citeauthoryear{Hall}{Hall}{2004}]{hall04}
{\sc Hall, N.} 2004.
\newblock Two concepts of causation.
\newblock In {\em Causation and Counterfactuals}.

\bibitem[\protect\citeauthoryear{Hall}{Hall}{2007}]{hall07}
{\sc Hall, N.} 2007.
\newblock Structural equations and causation.
\newblock {\em Philosophical Studies\/}~{\em 132,\/}~1, 109--136.

\bibitem[\protect\citeauthoryear{Hall and Paul}{Hall and Paul}{2003}]{hall03}
{\sc Hall, N.} {\sc and} {\sc Paul, L.~A.} 2003.
\newblock {\em Causation and Preemption}.
\newblock Oxford University Press.

\bibitem[\protect\citeauthoryear{Halpern and Pearl}{Halpern and
  Pearl}{2005a}]{halpernpearl05a}
{\sc Halpern, J.} {\sc and} {\sc Pearl, J.} 2005a.
\newblock Causes and explanations: A structural-model approach. part {I}:
  Causes.
\newblock {\em The British Journal for the Philosophy of Science\/}~{\em
  56,\/}~4, 843--87.

\bibitem[\protect\citeauthoryear{Halpern and Pearl}{Halpern and
  Pearl}{2005b}]{halpernpearl05b}
{\sc Halpern, J.} {\sc and} {\sc Pearl, J.} 2005b.
\newblock Causes and explanations: A structural-model approach. part {II}:
  Explanations.
\newblock {\em The British Journal for the Philosophy of Science\/}~{\em
  56,\/}~4.

\bibitem[\protect\citeauthoryear{Halpern and Hitchcock}{Halpern and
  Hitchcock}{}]{hh14}
{\sc Halpern, J.~Y.} {\sc and} {\sc Hitchcock, C.}
\newblock Graded causation and defaults.
\newblock {\em British Journal for the Philosophy of Science\/}~{\em
  forthcoming}.
\newblock \url{http://arxiv.org/pdf/1309.1226v1.pdf}.

\bibitem[\protect\citeauthoryear{Hitchcock}{Hitchcock}{2007}]{hitchcock07}
{\sc Hitchcock, C.} 2007.
\newblock Prevention, preemption, and the principle of sufficient reason.
\newblock {\em Philosophical review\/}~{\em 116,\/}~4, 495--532.

\bibitem[\protect\citeauthoryear{Hitchcock}{Hitchcock}{2009}]{hitchcock09}
{\sc Hitchcock, C.} 2009.
\newblock Structural equations and causation: six counterexamples.
\newblock {\em Philosophical Studies\/}~{\em 144}, 391--401.

\bibitem[\protect\citeauthoryear{Lewis}{Lewis}{1973}]{lewis73}
{\sc Lewis, D.} 1973.
\newblock Causation.
\newblock {\em J.~of Philosophy\/}~{\em 70}, 113--126.

\bibitem[\protect\citeauthoryear{Pearl}{Pearl}{2000}]{pearl:book}
{\sc Pearl, J.} 2000.
\newblock {\em Causality: Models, Reasoning, and Inference}.
\newblock Cambridge University Press, New York, NY, USA.

\bibitem[\protect\citeauthoryear{Shafer}{Shafer}{1996}]{shafer:book}
{\sc Shafer, G.} 1996.
\newblock {\em The Art of Causal Conjecture}.
\newblock Artificial Management. MIT Press.

\bibitem[\protect\citeauthoryear{Vennekens}{Vennekens}{2011}]{vennekens11}
{\sc Vennekens, J.} 2011.
\newblock Actual causation in cp-logic.
\newblock {\em Theory and Practice of Logic Programming\/}~{\em 11}, 647--662.

\bibitem[\protect\citeauthoryear{Vennekens, Denecker, and Bruynooghe}{Vennekens
  et~al\mbox{.}}{2009}]{vennekens09}
{\sc Vennekens, J.}, {\sc Denecker, M.}, {\sc and} {\sc Bruynooghe, M.} 2009.
\newblock {CP}-logic: A language of probabilistic causal laws and its relation
  to logic programming.
\newblock {\em Theory and Practice of Logic Programming\/}~{\em 9}, 245--308.

\bibitem[\protect\citeauthoryear{Vennekens, Denecker, and Bruynooghe}{Vennekens
  et~al\mbox{.}}{2010}]{vennekens:jelia}
{\sc Vennekens, J.}, {\sc Denecker, M.}, {\sc and} {\sc Bruynooghe, M.} 2010.
\newblock Embracing events in causal modelling: {I}nterventions and
  counterfactuals in {CP}-logic.
\newblock In {\em JELIA}. 313--325.

\end{thebibliography}

\end{document}